\renewenvironment{abstract}
	{\quotation}
	{\endquotation}
\date{}
\renewcommand{\fnum@figure}{\textbf{Figure \thefigure}}
\renewcommand{\fnum@table}{\textbf{Table \thetable}}
\newtheorem{proposition}{Proposition}
\def\scititle{
	V*: An Efficient Motion Planning Algorithm for Autonomous Vehicles
}
\title{\bfseries \boldmath \scititle}
\author{
	Abdullah~Zareh~Andaryan$^{1\ast}$,
	Michael~G.H.~Bell$^{1}$,
	Mohsen~Ramezani$^{2}$,
        Glenn~Geers$^{1}$\and
	\small$^{1}$Institute of Transport and Logistics Studies, Business School, the University of Sydney, Sydney, 2006, NSW, Australia\and
	\small$^{2}$School of Civil Engineering, Faculty of Engineering, the University of Sydney, Sydney, 2006, NSW, Australia\and
	\small$^\ast$Corresponding author. Email: abdullah.zareh@sydney.edu.au
}
\begin{document} 

\maketitle

\begin{abstract} \bfseries \boldmath
Autonomous vehicle navigation in structured environments requires planners capable of generating time-optimal, collision-free trajectories that satisfy dynamic and kinematic constraints. We introduce V*, a graph-based motion planner that represents speed and direction as explicit state variables within a discretised space–time–velocity lattice. Unlike traditional methods that decouple spatial search from dynamic feasibility or rely on post-hoc smoothing, V* integrates both motion dimensions directly into graph construction through dynamic graph generation during search expansion.

To manage the complexity of high-dimensional search, we employ a hexagonal discretisation strategy and provide formal mathematical proofs establishing optimal waypoint spacing and minimal node redundancy under constrained heading transitions for velocity-aware motion planning. We develop a mathematical formulation for transient steering dynamics in the kinematic bicycle model, modelling steering angle convergence with exponential behaviour, and deriving the relationship for convergence rate parameters. This theoretical foundation, combined with geometric pruning strategies that eliminate expansions leading to infeasible steering configurations, enables V* to evaluate dynamically admissible manoeuvres ensuring each trajectory is physically realisable without further refinement. We further demonstrate V*'s performance in simulation studies with cluttered and dynamic environments involving moving obstacles, showing its ability to avoid conflicts, yield proactively, and generate safe, efficient trajectories with temporal reasoning capabilities for waiting behaviours and dynamic coordination.

\end{abstract}

\noindent
Autonomous navigation requires planning algorithms that can compute time-efficient and dynamically feasible trajectories. Among the diverse approaches to motion planning, optimal pathfinding algorithms are recognized for their ability to guarantee high-quality solutions by minimizing path costs under strict constraints. This makes them particularly valuable in environments where precision and efficiency are critical. Classical pathfinding methods such as the Dijkstra~\cite{dijkstra1959} and A*~\cite{hart1968} algorithms have been widely adopted for solving shortest path problems on graphs due to their efficiency. However, their applications are typically limited in physically grounded motion planning domains. These methods typically assume holonomic motion and disregard vehicle-specific limitations such as steering curvature, acceleration bounds, and directional inertia. In contrast, real-world autonomous vehicles \footnote{A vehicle equipped with an ”automated driving system” capable of performing on-road operations either without occupants or with passengers who are not drivers or fallback-ready users \cite{sae2021}} operate under non-holonomic constraints (it cannot move freely in all directions because the motion depends on both position and velocity) and dynamic limitations, such as restricted turning radii, bounded accelerations, and fixed heading directions.  These characteristics pose significant challenges for traditional grid-based pathfinding methods.

 To address these limitations while preserving the advantages of graph-based search approaches, this paper presents V* algorithm, a motion planning framework that constructs a discretised space–time–velocity lattice, in which each node captures not only spatial location and heading but also forward speed. By integrating speed and direction into the graph itself, V* allows the planner to evaluate transitions that are dynamically admissible from the outset. Motion feasibility is enforced via a kinematic model, ensuring that generated paths inherently respect turning radii and acceleration bounds without the need for downstream correction.

The design of the search space is grounded in a hexagonal discretisation, chosen not only for its geometric regularity but also for its provably optimal properties under constrained motion. By analytically minimizing redundancy in node placement, this configuration balances spatial resolution with computational efficiency.  Furthermore, a geometric pruning mechanism is developed to restrict node expansion based on steering and turning feasibility, incorporating our theoretical model of steering angle convergence behavior during vehicle maneuvering. 
The algorithm provides theoretical guarantees on solution quality and completion: we prove that V* finds optimal paths within the discretised lattice and always terminates when a solution exists.

The scope of this work is limited to the single-agent case, which serves as the basis for future extensions to multi-agent scenarios. The paper formalizes the structure of the V* algorithm, establishes optimality conditions under consistent and admissible heuristics, and introduces a geometric framework to control the density of node expansion. Simulations illustrate the algorithm's effectiveness in generating realistic, kinematically admissible paths in complex environments, demonstrating its potential as a foundational approach for dynamic vehicle motion
planning. 

\subsection*{Related works}

The A* algorithm, introduced by Hart, Nilsson and Raphael in 1968~\cite{hart1968}, is a simple yet highly effective graph-based method widely utilized in optimal pathfinding within artificial intelligence and autonomous navigation ~\cite{wang2024}. It evaluates potential paths using a combined cost function
\begin{equation}
 f(n) = g(n) +\widehat{h}(n)
\end{equation}
where \(g(n)\) is the accumulated cost from the start node to the node \(n\), and \(\widehat{h}(n)\) is a heuristic estimate of the cost from \(n\) to the goal. If the heuristic is admissible and consistent, A* guarantees an optimal path~\cite{hart1968}. Despite its effectiveness, A*'s straightforward assumptions can become overly restrictive in complex and dynamic scenarios, particularly involving non-holonomic vehicles. Traditional grid-based graph representations underlying A* are typically inadequate for capturing essential kinodynamic constraints, such as curvature and acceleration, necessary for accurate path planning in realistic vehicle navigation contexts ~\cite{yu2024}.

This recognition gave rise to kinodynamic extensions of A* that embed physical constraints such as acceleration, curvature, and turning radius directly into the planning model. The most notable advancement is Hybrid A* \cite{pivtoraiko2009differentially}, which represents the closest precedent to velocity-aware graph search. Hybrid A* incorporates kinodynamic constraints through precomputed motion primitives that connect discrete states defined by position and orientation ($x, y, \theta$). These primitives embed velocity information within the trajectory segments (edges) rather than as explicit state parameters, enabling structured kinodynamic planning while maintaining A*'s optimality guarantees.

Building on this foundation, several variants have emerged to address specific limitations. Kinodynamic A* \cite{likhachev2009planning} expands states that respect acceleration constraints but relies on fixed primitive libraries. McNaughton et al. (2011) developed spatiotemporal lattices for highway scenarios, though with fixed-speed assumptions that limit adaptability to variable velocity requirements. More recently, Dehghani Tezerjani et al. (2024) proposed a hierarchical approach combining enhanced A* with Time Elastic Band (TEB) local planning, achieving real-time performance through adaptive waypoint density control, though velocity optimization remains decoupled from the graph search phase.

Alternative graph-based approaches have explored different representations. Marcucci et al.~\cite{marcucci2023} introduce "graphs of convex sets" where regions represent collision-free space with admissible speed ranges under kinematic bicycle models \cite{polack2017kinematic}. Regarding discretisation strategies, \cite{an2024improved} advocate for hexagonal grid structures, citing their equal-directional movement cost and reduced directional bias. Their empirical findings validate smoother trajectories and reduced turning penalties. However, these studies largely assess geometric or computational metrics and do not formally integrate vehicle motion constraints into the analysis.

Despite these advances, existing A*-based kinodynamic planners share fundamental limitations: velocity is either embedded in motion primitives (edges), handled through post-processing, or treated as an optimization variable separate from graph search. Additionally, these approaches rely on offline preprocessing to generate complete libraries of feasible motions before planning begins, constraining the planner's expressiveness to predetermined transitions and limiting adaptability to novel scenarios not anticipated during preprocessing \cite{likhachev2009planning, pivtoraiko2009differentially}. No current approach successfully combines explicit velocity encoding as a structural parameter of graph nodes with dynamic graph generation.

The V* algorithm addresses this gap by constructing a fully discrete space-time-velocity lattice where each node explicitly encodes position, heading, forward speed, and time ($x$, $y$, $\theta$, $v$, $t$), while simultaneously generating the graph dynamically during search expansion. This represents a fundamental architectural departure from existing kinodynamic A* approaches across multiple dimensions. In Hybrid A*, velocity is implicitly embedded within motion primitives—precomputed trajectory segments that connect discrete configuration states ($x$, $y$, $\theta$). These primitives contain velocity profiles as properties of the edges, making velocity a characteristic of transitions rather than states. Likhachev and Ferguson's approach \cite{likhachev2009planning} follows a similar paradigm, where velocity is derived from the execution of motion primitives that satisfy differential constraints. While these approaches successfully incorporate kinodynamic constraints, they require the planner to reason about velocity indirectly through primitive selection and composition.

The second critical difference lies in graph construction methodology. Hybrid A* relies on offline computation of motion primitive libraries using boundary value problem solvers to connect lattice nodes while satisfying kinodynamic constraints. These primitives, typically generated using techniques like Reeds-Shepp curves or clothoid segments, form a fixed repertoire of feasible motions that determines the planner's expressiveness. The resolution and diversity of solutions are fundamentally constrained by the primitive library's comprehensiveness—maneuvers not represented in the precomputed set cannot be discovered during search. Kinodynamic A* faces similar limitations, requiring extensive offline computation to generate sufficient primitive diversity while maintaining real-time performance. V* eliminates this dependency through dynamic feasibility verification using geometric pruning and real-time kinematic bicycle model checking. Rather than consulting a primitive database, V* generates feasible transitions on-demand during node expansion, enabling discovery of motion combinations not explicitly precomputed. This approach trades offline computational burden for online flexibility, allowing adaptation to novel scenarios without primitive regeneration.

Heuristic design and optimality considerations further distinguish these approaches. The heuristic functions in Hybrid A* typically rely on geometric approximations such as Euclidean distance, Dubins path lengths, or Reeds-Shepp distances from the current configuration to the goal. These heuristics, while admissible, often significantly underestimate the true cost in complex kinodynamic scenarios, leading to extensive search expansion. Kinodynamic A* faces similar challenges in designing informative heuristics that account for velocity and acceleration constraints. V* enables a new class of velocity-aware heuristics specifically designed for the space-time-velocity lattice. The proposed heuristic accounts for both spatial and temporal aspects of motion, providing more accurate cost-to-go estimates that incorporate velocity-dependent travel times. This improved heuristic guidance, combined with the explicit velocity representation, enables more efficient search convergence while maintaining optimality guarantees.

Temporal reasoning capabilities differ substantially between approaches. Traditional motion primitive approaches typically focus on kinematic maneuvers and active motion scenarios, with their foundational papers \cite{likhachev2009planning} concentrating on high-speed driving and parking navigation without explicitly modeling stationary states within their core frameworks. The configuration-based state representation ($x$, $y$, $\theta$) makes it challenging to directly incorporate time-dependent behaviors such as intentional waiting, temporal coordination with moving obstacles, or hold-in-place actions, often requiring these considerations to be handled through external coordination mechanisms, specialized primitive libraries, or replanning strategies.

V* provides a fundamental architectural advantage through its native time-velocity representation. By naturally incorporating zero-velocity states as explicit nodes in its state lattice naturally incorporates zero-velocity states as explicit nodes in its state lattice ($x$, $y$, $\theta$, $v=0$, $t$), enabling direct representation of stationary behaviors as valid planning outcomes. This capability allows the planner to reason about hold-in-place actions, temporal coordination with moving obstacles, and intentional waiting behaviors as integral parts of the motion plan, leveraging its native time-velocity representation to handle temporal dependencies within the search process itself.

The computational characteristics of these approaches reflect their different design philosophies. Hybrid A* operates in 3D configuration space ($x$, $y$, $\theta$) using precomputed motion primitives, while V* searches in a higher-dimensional space ($x$, $y$, $\theta$, $v$, $t$) with dynamic feasibility checking. This dimensional difference represents a fundamental trade-off: V* enables more direct velocity reasoning and temporal coordination at the cost of increased per-node computational complexity. We demonstrate that hexagonal discretisation and geometric pruning strategies help manage this computational overhead while preserving optimality guarantees within the discretised space.

All approaches provide optimality guarantees only within their discretised representations, facing the fundamental trade-off between computational tractability and continuous-space optimality. However, V*'s systematic discretisation of velocity and time dimensions enables fundamentally new planning capabilities that emerge naturally from its representation: temporal coordination with moving obstacles, intentional waiting behaviors, and direct velocity-time trade-off optimization. While primitive-based approaches could incorporate such behaviors through specialized design, V*'s native space-time-velocity representation makes these capabilities architecturally inherent rather than requiring additional algorithmic complexity.

\section*{Simulation Results}
\subsection*{Heuristic impact on V* efficiency}

The performance of the V* algorithm is highly dependent on the choice of
heuristic function. In the absence of a heuristic term, that is, when
\(h= 0\), the algorithm performs an exhaustive search by
expanding nodes purely based on their accumulated cost from the start.
While this guarantees optimality, it leads to unnecessary exploration of
nodes, significantly increasing the computational burden, especially in
large or complex environments.

A commonly used heuristic is the Euclidean distance between the current node and the goal. This heuristic is admissible and consistent, and in many open environments,
it offers a reasonable trade-off between exploration and exploitation.
However, its effectiveness diminishes in environments where the
straight-line path is not feasible due to obstacles or complex terrain.

In such cases, the straight-line heuristic may underestimate the actual
cost too severely, leading the algorithm to explore paths that appear
promising in terms of the heuristic but are practically blocked or
suboptimal. This misguidance can degrade performance, causing excessive
node expansions and increased run-time. 



An alternative to the straight-line heuristic is the waterflow heuristic
\cite{moore1959}, which aims to improve performance in complex environments,
where obstacles significantly constrain direct paths. Unlike the
Euclidean approach, which assumes a direct line to the goal,
the waterflow heuristic is derived from a simulated diffusion process
that mimics the behavior of water propagating from the goal location.

The algorithm begins by treating the goal as a source point and
propagates a cost field outward through the free space in the
environment. This is typically achieved using a wavefront expansion or a
discrete potential field, ensuring that cost values increase gradually
with distance from the goal while respecting obstacle boundaries. The
environment is discretised into cells, where each cell has a value based
on the minimum cost required to reach the goal, accounting for navigable
paths rather than just geometric distance.

Nevertheless, several drawbacks must be considered. High-resolution
grids can impose significant computational demands due to the extensive
propagation involved. Conversely, if the grid resolution is too coarse,
the propagated values may overestimate the true cost-to-go, thereby
violating admissibility. However, since the heuristic is computed only
once, assuming full knowledge of the environment, and the cost in V*
corresponds to the remaining travel time, a simple ratio of distance to
maximum speed already provides a reasonable underestimation. Therefore,
selecting a moderate grid resolution that balances computational effort
and heuristic accuracy is sufficient to ensure optimality. Fig.~\ref{heuristiccompare}
demonstrates the effect of using the waterflow algorithm in the node
expansion process.  Darker regions indicate proximity to the goal, with color intensity increasing as the distance decreases.

\begin{figure}[ht]
\centerline{\includegraphics[width=\columnwidth]{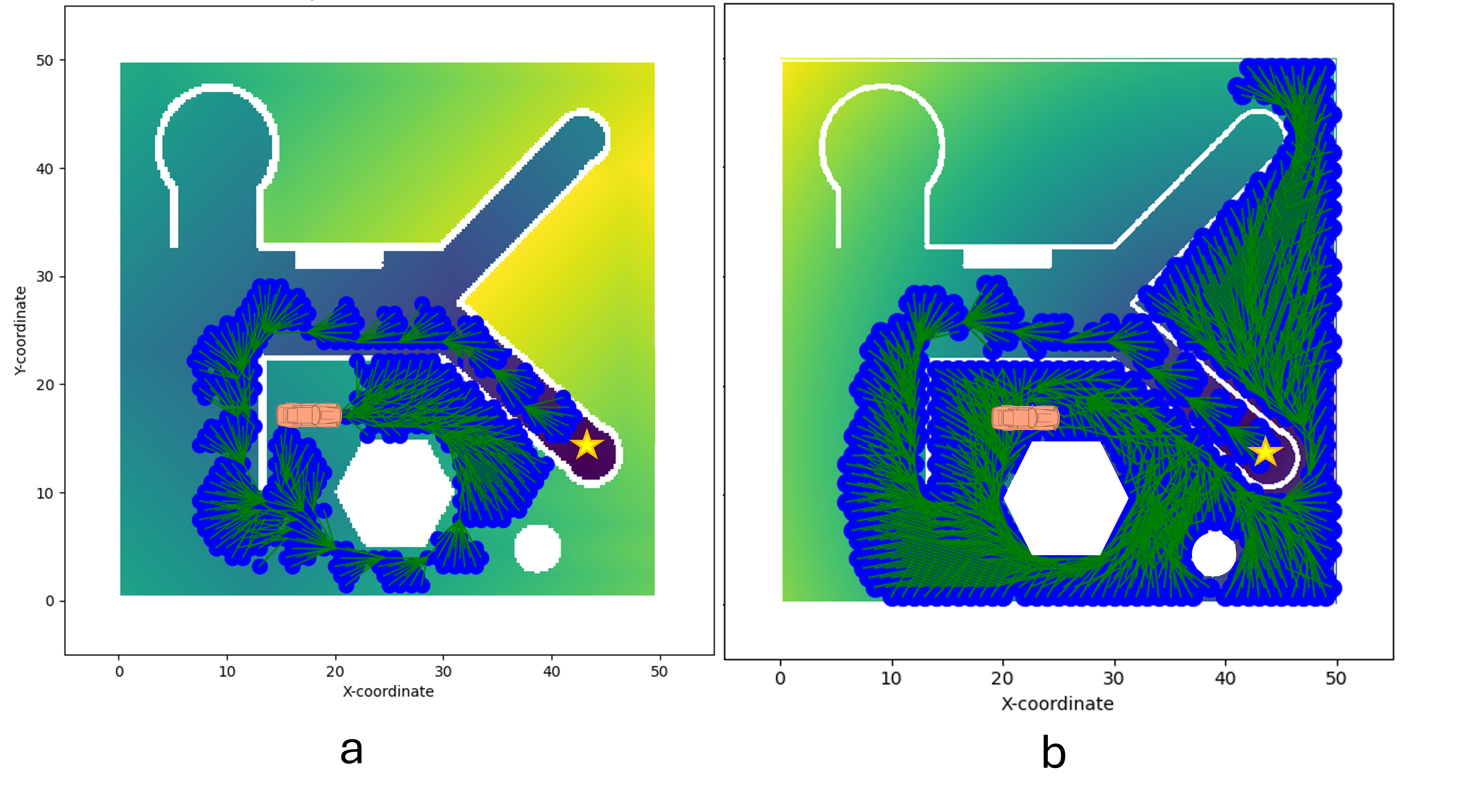}}
\caption{\textbf{Comparison of search expansions using the waterflow heuristic.}
The figure (a) demonstrates the effect of incorporating the waterflow heuristic, where the vehicle starts with a heading
opposite to the goal (star). In this case expansions are more
focused and efficiently directed toward the goal. The figure (b) shows the search tree generated with the Euclidean distance heuristic, resulting in wide exploration even in less promising
directions and regions. }
\label{heuristiccompare}
\end{figure}
To investigate the impact of heuristic functions, we compared the
performance of V* in generating trajectories under two settings: one
using a Euclidean distance heuristic and the other using a Waterflow-based heuristic. The map is implemented as an
environment that contains stationary obstacles and maze-shaped wall structures. The vehicle was initialised with zero speed and an arbitrary heading, and the goal region was defined as a bounded area
within a 1-meter tolerance from the target location. We assumed a
maximum speed of \qty[mode=text]{4}{\meter \per \second} and steering angles ranging from \ang{-30} to \ang{30},
with a wheelbase of \qty[mode=text]{2}{\meter}. Fig.~\ref{trajectory} shows the trajectory
generated by V* across four randomized trials. 
\begin{figure}[htbp]
\centerline{\includegraphics{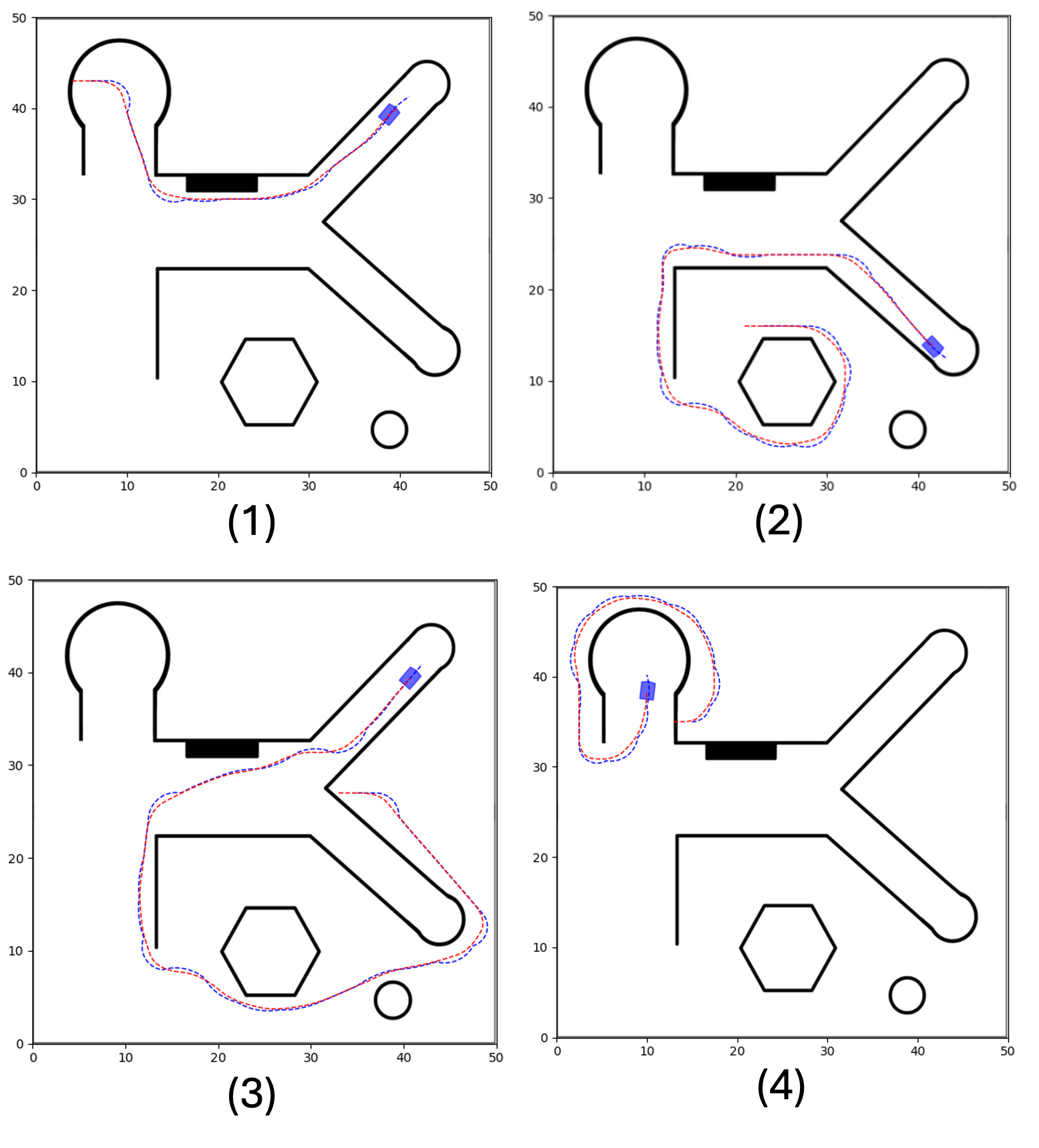}}
\caption{\textbf{Illustration of the planned trajectory generated by the V*
algorithm in four different trials.} The blue dashed line represents the
path of the front wheel, and the red line shows the trajectory of the
rear wheel. The final position of the AV is depicted as a blue pointed box.}
\label{trajectory}
\end{figure}

The vehicle successfully followed time-efficient paths that avoided obstacles and remained
kinematically feasible.Table 1 summarizes the number of expanded nodes,
CPU time, and total path cost across the four trials. The V* algorithm
consistently exhibited lower node expansion and runtime when guided by
the Waterflow heuristic. The Euclidean heuristic, by contrast, expanded
more nodes in non-promising directions due to its lack of environmental
awareness.
\begin{table}[htbp]
\caption{Comparison of Euclidean and Waterflow Heuristics}
\begin{center}
\begin{tabular}{|c|cc|cc|c|}
\hline
\textbf{Experiment} & \multicolumn{2}{c|}{\textbf{Euclidean}} & \multicolumn{2}{c|}{\textbf{Waterflow}} & \textbf{\begin{tabular}[c]{@{}c@{}}Path\\ Cost\\ (s)\end{tabular}} \\
\cline{2-5}
  & \textbf{\textit{Nodes}} & \textbf{\textit{Time(s)}} & \textbf{\textit{Nodes}} & \textbf{\textit{Time(s)}} & \\
\hline
1 & 194   & 0.617  & 173  & 0.095 & 12 \\
2 & 44889 & 19.818 & 839  & 0.216 & 22 \\
3 & 71619 & 31.385 & 529  & 0.144 & 26 \\
4 & 2066  & 1.077  & 281  & 0.122 & 14 \\
\hline

\end{tabular}\label{tab:heuristic_comparison}
\end{center}
\end{table}

\subsection*{Moving obstacles}
 We now evaluate its performance in dynamic scenarios involving moving obstacles. This evaluation shifts focus from heuristic comparison to temporal coordination capabilities, examining how V*'s native space-time-velocity representation handles the complexities of dynamic environments. Unlike static obstacle avoidance, planning with moving obstacles requires the algorithm to reason about future positions and potential conflicts, execute waiting behaviors when necessary, and adapt its trajectory timing to avoid collisions for safe navigation. To demonstrate these capabilities, we examine a representative scenario involving four moving obstacles presented in Fig.~\ref{fig:moving_objects_4}. The red agents follow predefined, fixed trajectories and are entirely unaware of the vehicle's presence. In contrast, the blue agent—operating under the V* algorithm—has full knowledge of the obstacles' future positions and velocities. Leveraging this information, the planner evaluates feasible trajectories in a discretised space–time–velocity lattice, allowing the vehicle to anticipate future conflicts and adjust its motion proactively.

Fig.~\ref{fig:moving_objects_4} presents a sequence of simulation frames showcasing the motion planning behavior of the AV (blue) navigating in an environment populated with multiple moving obstacles (red). 

This anticipatory behavior is particularly evident in the second frame, where the vehicle comes to a complete stop to yield to a high-speed obstacle crossing its intended path. Once the interfering obstacle has passed, the vehicle smoothly resumes its trajectory, adjusting speed and heading to safely navigate around other moving agents. 

\begin{figure}
    \centering
    \includegraphics[width=0.9\linewidth]{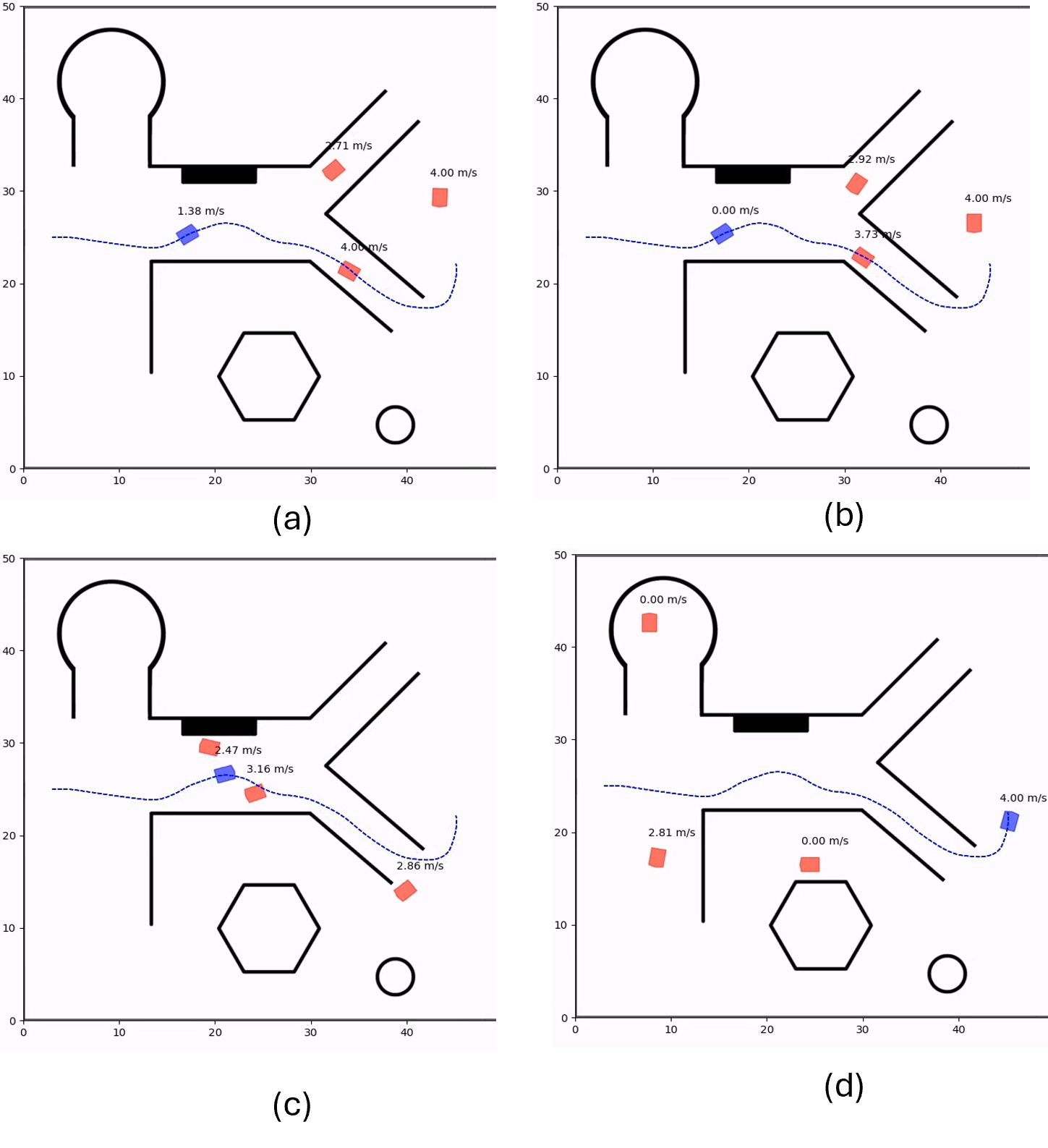}
    \caption{\textbf{Sequence of simulation snapshots illustrating the proactive behavior of the autonomous vehicle (blue) navigating through a structured environment populated with independent, non-cooperative moving obstacles (red)}. Obstacles move along fixed trajectories regardless of the vehicle’s motion. The blue dashed line represents the planned trajectory, and annotated speed labels (in m/s) show the instantaneous velocities of all agents.}
    \label{fig:moving_objects_4}
\end{figure}

\section*{Methodology}

We consider the problem of motion planning for a single AV tasked with navigating from a given start position to a specified goal in an environment populated with known obstacles  that may move over time, but whose initial positions and trajectories are assumed to be known and fixed during planning. The objective is to generate a trajectory that minimizes total travel time while ensuring kinematic feasibility and collision avoidance.

The vehicle is modeled as a non-holonomic system with a limited turning
radius, constrained to move in the forward direction. Its motion is
governed by a set of control parameters, including steering angle and acceleration, which must remain within predefined physical
bounds. The environment is represented as a two-dimensional continuous
space discretised into a space-time grid.

 A safety zone is defined around each obstacle and agent to reduce the risk of collision and ensure safe navigation. Initially, this zone is represented as a circular region of radius $s$ for each obstacle irrespective of its size. When the obstacle begins to move, the safety zone morphs into an ellipse, with its major axis aligned with the direction of the obstacle's velocity and a focal distance of \(s^{+}=\frac{v_{o}}{2|b|}\), where $v_o$ represents obstacles velocity and $b$ is the comfort deceleration factor (see Fig.~\ref{safety}). The motion planning process identifies a path for the AV that avoids entering these safety zones, ensuring it steers clear of obstacles.

\begin{figure}[htbp]
\centerline{\includegraphics{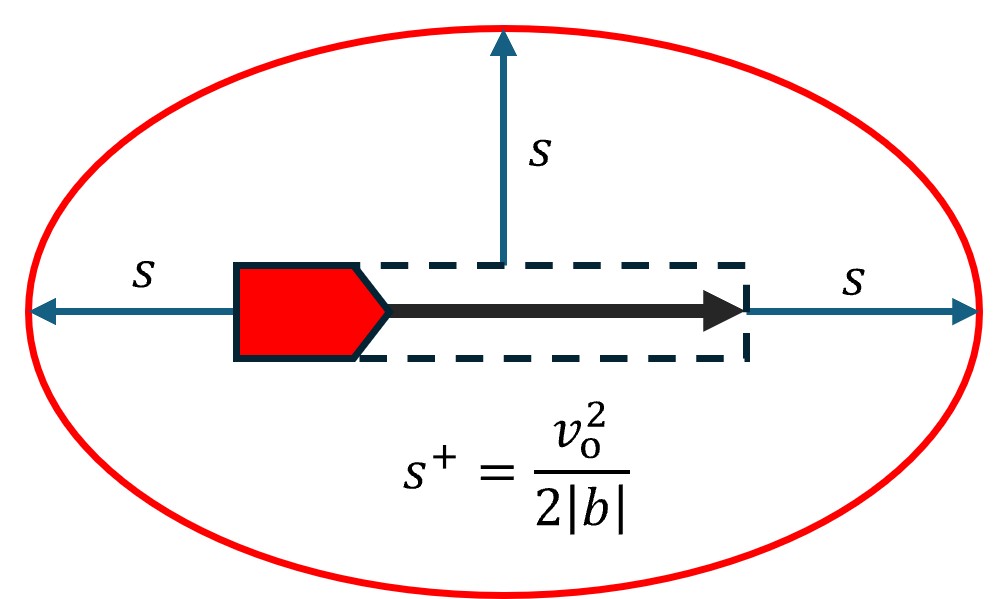}}
\caption{\textbf{The safety zone is an elliptical region surrounding a moving
obstacle}. The extended distance is calculated as the
obstacle's velocity (\(v_{o}\)) divided by the comfort
deceleration factor (\(b\)).}
\label{safety}
\end{figure}

\subsection*{Algorithmic framework}

The V* algorithm builds upon the classical A* framework, which searches
a graph \(G=(V,E)\) where $V$ is the set of nodes and $E$ is the set of edges representing feasible transitions between nodes. The goal is to find an optimal path from a start node \(s\) to a goal
node \(t\), with edges representing distances between vertices. Starting
at node \(s\), the algorithm incrementally explores \(G\) using a
branching operator and prioritizes nodes in an open list based on their
cost. The cost function evaluated at each node \(n\) is:
\begin{equation}
    f(n) = g(n) + h(n)
\end{equation}
where \(g(n)\) is the cost from the start node to \(n\), and \(h(n)\) is
the actual cost of the optimal path from \(n\) to the goal. While
\(g(n)\) is known during the algorithm's progress,
\(h(n\)) remains unknown, since the optimal path has not yet been
identified. Therefore, a heuristic estimate \(\widehat{h}(n)\) is used
to approximate \(h(n)\). The algorithm terminates at the goal node, and
the optimal path is reconstructed by backtracking from the goal to the
start using stored pointers.

However, A* does not account for realistic vehicle dynamics, as it
assumes transitions between grid points without considering velocity,
acceleration, or orientation constraints. To address this, the V*
algorithm extends A* by embedding motion dynamics directly into the
search structure. In V*, the search space is represented as a
discretised space-time graph, where each node captures the vehicle's
spatial position, heading angle, and speed at a given time step. The
algorithm initializes from a start node and generates successors by
discretising both direction and velocity. These successors are evaluated
based on actual travel time (reflecting the \(g\)-value) and an
admissible time-to-go heuristic (reflecting the \(\widehat{h}\)-value).
Fig.~\ref{spacetime} shows neighboring nodes of a given node at discrete time
intervals and spatial locations.

\begin{figure}[htbp]
\centerline{\includegraphics{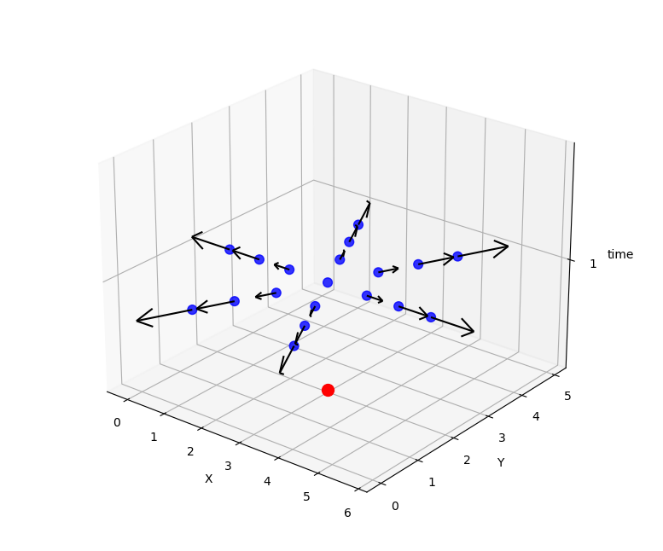}}
\caption{\textbf{Space-time representation of discretised motion planning nodes
centered around a current node with zero velocity (red) at time \(t=0\)}. Blue nodes are
the neighbouring nodes at \(t=1\), with outward-pointing arrows
illustrating heading and speed.}
\label{spacetime}
\end{figure}

The V* algorithm begins by adding the start node to the open list with a
cost \(\widehat{f}(n_0) = \widehat{h}\ (n_0)\), where \(\widehat{h}(n_0)\) is
the estimated remaining distance to the goal divided by the maximum
permissible speed. It then branches from the node $n$ with the smallest value of
\(\widehat{f}(n) = g(n) + \widehat{h}(n)\), generating successor nodes
by discretising directions and speeds into finite sets. Each neighbouring node $n^\prime$
represents a state defined by position and velocity, constrained by the
AV's dynamics (e.g., acceleration, deceleration, turning radius) using
the bicycle model \cite{mistri2019}. For each successor $n^\prime$ , the cost is
updated as \(g( n') = g(n) + 1\), with
\(\widehat{h}(n')\) estimating the remaining time from
\(n'\) to the goal. The algorithm terminates when it selects a node
\(n'\) within a suitably predefined small distance \(\varepsilon\) from the goal. Algorithm~\ref{alg:vstar} shows the detailed steps of the V* algorithm.

\begin{algorithm}[H]
\caption{V*} \label{alg:vstar}
\begin{flushleft}
\footnotesize   
\begin{algorithmic}[1]
\Require heading angle $\theta$, start position $n_0$, goal point $n_{\mathrm{goal}}$, termination threshold $\varepsilon$, wheelbase $L$, planning time $T$, 
         angular threshold $\delta_{\text{max}}$, max speed $v_{\mathrm{max}}$
\State \textbf{Initialize}
\State \textit{open\_list} $\gets$ a priority queue that sorts nodes based on their $f$ value in an ascending order
\State \textit{closed\_set} $\gets \emptyset$
\State $n \gets$ $n_0$
\State $n.g \,\gets\, 0$
\State $n.h \,\gets\, \hat{h}(n)$
\State \textit{open\_list} $\gets$ \textit{open\_list} $\cup \{n\}$
\While{\textit{open\_list} $\neq \emptyset$}
    \State $n \gets$ choose the node with minimum $f(n)$ within the \textit{open\_list} and remove it from the list
    \If{ the Euclidean distance between $n$ and $n_{\mathrm{goal}} < \varepsilon$}
        \State \Return path to $n$ by tracing back the parents
    \EndIf
    \State \textit{closed\_set} $\gets$ \textit{closed\_set} $\cup \{n\}$
    \State $N \gets$ \textbf{generate\_neighbours}($n$, $L$, $T$, $\delta_{\text{max}}$, $v_{\mathrm{max}}$)
    \ForAll{$n' \in N$}
        \If{$n' \notin$ \textit{closed\_set}}
            \State  $g \gets n.g + 1$
            \If{$n' \notin$ \textit{open\_list}}
                \State $n'.g  \gets\ g$
                \State $n'.h \gets \hat{h}(n')$
                \State $n'.f \gets n'.g + n'.h$
                \State $n'.\text{parent} \gets n$
                \State \textit{open\_list} $\gets$ \textit{open\_list} $\cup \{n'\}$
            \ElsIf{ $g<\, n'.g$}
                \State $n'.g \gets g $
                \State $n'.f \gets n'.g + n'.h$
                \State $n'.\text{parent} \,\gets\, n$
                \State Update $n'$ in \textit{open\_list} and re‐sort the list by increasing $n.f$
            \EndIf
        \EndIf
    \EndFor
\EndWhile
\end{algorithmic}

\vspace{0.5ex}
\textbf{Note:} The “$n.*$” notation indicates a property (field) of node $n$.
\end{flushleft}
\end{algorithm}

To construct the set of dynamically feasible neighbor nodes, the
function \textbf{generate\_neighbours}() defines a localized hexagonal
lattice centered at the vehicle's current position.
Then, the candidate set is filtered by applying geometric and kinematic
constraints, ensuring that only transitions consistent with the
vehicle's orientation and motion model are retained.

The procedure begins by generating a bounded region of lattice offsets,
determined by the maximum admissible speed. For each surviving candidate, the function computes
the turning angle required to reach the new node, from which the turning
radius and angular velocity are inferred. Nodes are admitted only if
their turning radius falls within the feasible bounds of the vehicle's
steering system and if the resulting angular velocity remains within
limits, preventing high-speed or discontinuous maneuvers. The final
output is a refined list of reachable nodes, each encoded with position,
heading, and path length. Algorithm~\ref{alg:generate_neighbours} shows the detailed steps of node
generation function.

\begin{algorithm}[H]
\caption{generate\_neighbours}
\label{alg:generate_neighbours}
\begin{algorithmic}[1]
\Require Current state $(x, y, \theta,v,t)$, lattice magnitude $m$, wheelbase $L$, planning time $T$, 
         angular threshold $\delta_{\text{max}}$, max speed $v_{\mathrm{max}}$, maximum acceleration $a$ 
\State \textbf{Initialize:} 
\State $\mathbf{u_1} \gets [m, 0]$, $\mathbf{u_2} \gets [\frac{m}{2}, \frac{m\sqrt{3}}{2}]$ \Comment{Hexagonal base vectors}
\State $\mathbf{M} \gets [\mathbf{u_1} | \mathbf{u_2}]$ \Comment{Lattice matrix (2×2)}
\State $I_{\text{max}} \gets \lceil v_{\text{max}} / m \rceil + 1$, $N \gets \emptyset$
\For{each lattice index pair $(i,j)$ in $[-I_{\text{max}}, I_{\text{max}}]^2 \setminus \{(0,0)\}$}
    \State $\mathbf{p} \gets \mathbf{M} \cdot [i, j]^\mathrm{T}$ \Comment{Generate hexagonal lattice point}
    \State $dx, dy \gets \mathbf{p}[0], \mathbf{p}[1]$
    \State $\theta'\gets\text{the heading angle of }\mathbf{p}$
    \If{  $ (dx \cdot \cos\theta + dy \cdot \sin\theta) / \|\mathbf{p}\| \le \cos\delta_{\mathrm{max}}$ }
        \State $(x', y') \gets (x, y) + \mathbf{p}$
        \State $\phi \gets \text{ get the angular difference between } \theta' \text{ and } \theta$
        \State $r \gets$ turning radius from arc geometry using $\phi$ and $\mathbf{p}$
        \State $v' \gets$ compute the speed from $r$, $\phi$, and $T$
        \State $t'\gets t+1$
        \If{ $0\le v'$ \textbf{and } $ v-a \le v'$ \textbf{and} $v'\le v+a$ \textbf{and} $v'\le v_{\mathrm{max}}$ \textbf{and }  $r > L/m$ }
            \State $N \gets N \cup \{[x', y', \theta', v',t']\}$
        \EndIf
    \EndIf
\EndFor
\State \Return $N$
\end{algorithmic}

\end{algorithm}

\subsection*{Theoretical Analysis}

To rigorously analyze the performance of the V* algorithm, this section
presents a series of theoretical propositions, each accompanied by a
 proof. These propositions are chosen to address key aspects of
the algorithm's characteristics: Optimality, spatial dispersion of waypoints,
efficient coverage of the search space, and guaranteed termination.
Proposition~\ref{pr_optimal} establishes that V* yields optimal solutions under a
consistent and admissible heuristic. Propositions~\ref{pr_distabce} and~\ref{pr_minNumber} provide
geometric foundations for designing an efficient graph structure by
determining the optimal heading increment that balances coverage with
sparsity. Proposition~\ref{pr_terminate} guarantees algorithmic completeness by ensuring
termination under reasonable assumptions.
\begin{proposition}[Optimality of V*]
\label{pr_optimal}
If $\widehat{h}(n) = \frac{d(n,n_{\mathrm{goal}})}{v_{\max}}$ where $d(n,n_{\mathrm{goal}})$ is the Euclidean distance from node $n$ to the goal, then the V* algorithm guarantees finding an optimal path within the discretised space-time-velocity lattice.
\end{proposition}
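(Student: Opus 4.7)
The plan is to reduce the statement to the classical optimality theorem for A*, which guarantees that A* returns a least-cost path whenever the heuristic is both admissible and consistent. Since V* is structurally an A* search on the space--time--velocity lattice generated by \textbf{generate\_neighbours}, it suffices to verify these two properties for $\widehat{h}(n) = d(n,n_{\mathrm{goal}})/v_{\max}$ under the edge-cost model used by the algorithm (each expansion advances the time index by one step of duration $T$, and the $g$-value counts elapsed steps, so edge costs are proportional to elapsed travel time).

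First, I would establish \emph{admissibility}: for every node $n$, $\widehat{h}(n) \le h^{*}(n)$, where $h^{*}(n)$ is the true minimum remaining travel time to the goal along any lattice-feasible trajectory. The argument is geometric: any trajectory from $n$ to $n_{\mathrm{goal}}$ has arc length at least $d(n,n_{\mathrm{goal}})$ by the straight-line lower bound, and the vehicle's instantaneous speed is bounded above by $v_{\max}$ (enforced in the feasibility check of Algorithm~\ref{alg:generate_neighbours}). Dividing these two inequalities gives $h^{*}(n) \ge d(n,n_{\mathrm{goal}})/v_{\max} = \widehat{h}(n)$. I would also note that $\widehat{h}(n_{\mathrm{goal}}) = 0$.

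Second, I would establish \emph{consistency}: for every edge $(n,n')$ generated by the expansion routine, $\widehat{h}(n) \le c(n,n') + \widehat{h}(n')$, where $c(n,n')$ is the edge cost (the time increment between the two states). Multiplying through by $v_{\max}$, this reduces to $d(n,n_{\mathrm{goal}}) \le v_{\max}\, c(n,n') + d(n',n_{\mathrm{goal}})$. The triangle inequality gives $d(n,n_{\mathrm{goal}}) \le d(n,n') + d(n',n_{\mathrm{goal}})$, and the same speed/arc-length bound used above yields $d(n,n') \le v_{\max}\, c(n,n')$, so the two combine to give consistency.

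Finally, I would invoke the standard A* optimality result: because V* expands nodes in order of non-decreasing $\widehat{f} = g + \widehat{h}$, uses a closed set, and updates $g$-values upon discovering cheaper paths (lines in Algorithm~\ref{alg:vstar} corresponding to re-opening), the combination of admissibility and consistency implies that when a node $n$ is selected from the open list, $g(n)$ equals the true minimum cost from the start to $n$ within the lattice. Applying this to the first node satisfying the goal test yields an optimal lattice path, and the back-pointer reconstruction returns it. The main obstacle is bookkeeping rather than mathematics: one must be careful that the edge-cost convention in the pseudocode (a unit increment per time step of fixed duration $T$) is consistent with the time units of $\widehat{h}$, so I would open the proof with a one-line normalisation clarifying that both $g$ and $\widehat{h}$ are expressed in the same temporal units before invoking the admissibility/consistency arguments.
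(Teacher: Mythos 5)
Your proposal is correct and follows essentially the same route as the paper: admissibility via the straight-line-at-maximum-speed lower bound, consistency via the triangle inequality combined with the displacement bound $d(n,n') \le v_{\max}\,\tau$, and then the standard A* optimality conclusion (the paper phrases this last step as an explicit contradiction on a suboptimal termination node, while you cite the classical theorem, but the content is identical). Your closing remark about normalising the unit edge cost $g \gets n.g + 1$ against the temporal units of $\widehat{h}$ is a worthwhile bookkeeping point that the paper's proof leaves implicit.
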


\begin{proof}
We establish optimality by proving the heuristic is both admissible and consistent, then proceed to prove the proposition by contradiction, assuming that V* terminates with a suboptimal path to the goal.

\textbf{Admissibility:} The heuristic $\widehat{h}(n) = \frac{d(n,n_{\mathrm{goal}})}{v_{\max}}$ represents the theoretical minimum time to reach the goal, assuming direct straight-line motion at maximum velocity. Since no feasible path can traverse the Euclidean distance faster than this physical limit, we have $\widehat{h}(n) \leq h(n)$ for all nodes $n$, where $h(n)$ is the true optimal cost-to-go in the lattice.

\textbf{Consistency:} For any two adjacent nodes $n, n'$ in the V* lattice (connected by one time step $\tau$), we must show $\widehat{h}(n) \leq \tau + \widehat{h}(n')$.

Since $n'$ is reachable from $n$ in one time step, the maximum spatial displacement is bounded by $v_{\max} \cdot \tau$. By the triangle inequality:
$$d(n,n_{\mathrm{goal}}) \leq d(n,n') + d(n',n_{\mathrm{goal}}) \leq v_{\max} \cdot \tau + d(n',n_{\mathrm{goal}})$$

Dividing by $v_{\max}$:
$$\frac{d(n,n_{\mathrm{goal}})}{v_{\max}} \leq \tau + \frac{d(n',n_{\mathrm{goal}})}{v_{\max}}$$

Therefore, $\widehat{h}(n) \leq \tau + \widehat{h}(n')$, establishing consistency.

\textbf{Optimality:} Suppose V* terminates at goal node $\tilde{n}_{\mathrm{goal}}$ with cost $g(\tilde{n}_{\mathrm{goal}})$, and assume this is suboptimal compared to an optimal goal node $n_{\mathrm{goal}}$ with cost $g^* < g(\tilde{n}_{\mathrm{goal}})$.

At termination, there must exist some node $n'$ on the optimal path that remains in the open list. For this node:
$$f(n') = g(n') + \widehat{h}(n') \leq g(n') + h(n') = g^* < g(\tilde{n}_{\mathrm{goal}}) = f(\tilde{n}_{\mathrm{goal}})$$

Since V* selects the node with minimum $f$-value, it would have selected $n'$ before $\tilde{n}_{\mathrm{goal}}$, contradicting our assumption. Therefore, V* finds an optimal path within the discrete lattice.
\end{proof}
While this proposition establishes optimality within the discrete lattice, the relationship between discrete and continuous optimality depends on the lattice resolution and the specific kinematic constraints. As the spatial and temporal discretisation becomes finer, the V* solution approaches the continuous optimal solution.
To leverage the properties of the V* algorithm, the discretised
directions should sufficiently cover the space while minimizing the
generation of points that are excessively close to one another. If a
point is determined to be unsafe due to collision risk or path
infeasibility, exploring its neighboring points becomes unnecessary.
Therefore, to mitigate the state-space explosion, the network generation
must ensure that adjacent points maintain a minimum separation equal to
the specified safe distance.

\begin{proposition}
 \label{pr_distabce}
    For a network of waypoints generated by moving a fixed distance \(\varepsilon\) in a fixed direction from a starting point
\(\left( x_{0},y_{0} \right)\), the minimum distance between any two distinct waypoints is \(\varepsilon\), if \(\theta = \frac{\pi}{3}\).
\end{proposition}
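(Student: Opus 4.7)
The plan is to translate the geometric problem into a lattice-theoretic one and then bound the minimum norm of a nonzero lattice vector. Since the heading increment is $\theta=\pi/3$, the admissible step directions are $\{k\pi/3 : k=0,\dots,5\}$, giving six unit vectors. The key preliminary observation is that each of these six vectors can be written as an integer combination of just two basis vectors, namely $\mathbf{u}_1 = \varepsilon(1,0)$ and $\mathbf{u}_2 = \varepsilon(\tfrac{1}{2}, \tfrac{\sqrt{3}}{2})$, because the step at angle $2\pi/3$ is $\mathbf{u}_2 - \mathbf{u}_1$, the step at angle $\pi$ is $-\mathbf{u}_1$, and so on. Therefore, every waypoint reachable from $(x_0,y_0)$ lies in the triangular lattice
\begin{equation*}
\Lambda = \bigl\{ (x_0,y_0) + k_1 \mathbf{u}_1 + k_2 \mathbf{u}_2 : (k_1,k_2)\in\mathbb{Z}^2 \bigr\}.
\end{equation*}

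Next, I would compute the squared distance between two arbitrary distinct waypoints. Writing their displacement as $k_1\mathbf{u}_1+k_2\mathbf{u}_2$ and expanding with $\mathbf{u}_1\cdot\mathbf{u}_2 = \varepsilon^2/2$ gives
\begin{equation*}
\| k_1\mathbf{u}_1 + k_2\mathbf{u}_2 \|^2 = \varepsilon^2\bigl(k_1^2 + k_1 k_2 + k_2^2\bigr).
\end{equation*}
The problem therefore reduces to showing that the integer quadratic form $Q(k_1,k_2) = k_1^2 + k_1k_2 + k_2^2$ satisfies $Q(k_1,k_2)\geq 1$ for every $(k_1,k_2)\in\mathbb{Z}^2\setminus\{(0,0)\}$, with equality attained.

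For the lower bound I would complete the square, $Q(k_1,k_2) = (k_1 + k_2/2)^2 + 3k_2^2/4$, which is strictly positive unless $k_1=k_2=0$; since $Q$ takes only integer values on $\mathbb{Z}^2$, any nonzero value is at least $1$. Equality is witnessed by the six pairs $(\pm 1,0)$, $(0,\pm 1)$, $(1,-1)$, $(-1,1)$, corresponding exactly to the six nearest-neighbour steps in the lattice. Consequently, the minimum inter-waypoint distance equals $\varepsilon$, proving the claim.

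The argument is essentially routine; the only mildly delicate step is justifying that restricting to the two basis vectors $\mathbf{u}_1,\mathbf{u}_2$ captures \emph{every} waypoint reachable through arbitrary sequences of moves along any of the six allowed directions. I would handle this explicitly by writing each of the remaining four unit directions as an integer combination of $\mathbf{u}_1$ and $\mathbf{u}_2$, so that a composition of steps is itself an integer combination and $\Lambda$ above is the full set of reachable points.
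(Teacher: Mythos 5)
Your proposal follows the same overall route as the paper: both decompose every reachable waypoint into integer combinations of the two basis vectors $\mathbf{u}_1=\varepsilon(1,0)$ and $\mathbf{u}_2=\varepsilon(\cos\theta,\sin\theta)$ and reduce the claim to minimizing $\varepsilon\sqrt{m_1^2+2m_1m_2\cos\theta+m_2^2}$ over nonzero integer pairs. Where you genuinely diverge is in how that minimum is certified. The paper enumerates only the cases $m_1,m_2\in\{-1,0,1\}$, asserting that these are ``the nearest neighbors,'' and never rules out the possibility that larger coefficients could produce a shorter vector; strictly speaking that leaves a gap. You instead complete the square, $Q(k_1,k_2)=(k_1+k_2/2)^2+\tfrac{3}{4}k_2^2$, observe that $Q$ is a positive-definite integer-valued form, and conclude $Q\geq 1$ for all nonzero integer pairs with equality at exactly the six neighbour vectors. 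That argument is complete and covers all of $\mathbb{Z}^2\setminus\{(0,0)\}$ at once, so your version is the more rigorous of the two. You also make explicit the closure step -- that every one of the six admissible step directions is itself an integer combination of $\mathbf{u}_1$ and $\mathbf{u}_2$, so arbitrary move sequences stay in the lattice -- which the paper takes for granted. One caveat worth flagging: the proposition as stated in the paper is phrased for general $\theta$ (the paper's own derivation keeps $\cos\theta$ symbolic before specializing), whereas you fix $\theta=\pi/3$ from the outset; since the claim only asserts the conclusion for $\theta=\pi/3$, this is harmless, but it means your argument does not by itself show that $\pi/3$ is special among other angles, which is the role played by the paper's Case 3 computation $\eta=\varepsilon\sqrt{2-2\cos\theta}$.
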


\begin{proof}
For any value of \(\theta\), not an integer multiple of
\(\pi\), let \(\left( x_{i},y_{i} \right)\) represent the coordinates of
the $i$-th point in the network, where the points are generated
iteratively. Starting from the starting point
\(\left( x_{0},y_{0} \right)\) the positions are determined using:
\begin{equation}
    x_{i} = x_{i - 1} + \varepsilon\cos(i\theta),\quad y_{i} = y_{i - 1} + \varepsilon\sin(i\theta)
\end{equation}

\noindent This implies that the points are formed periodically by two basis
vectors:
\begin{equation}
    \mathbf{u}_{\mathbf{1}} = (\varepsilon,0),\quad\mathbf{u}_{\mathbf{2}} = \left( \varepsilon\cos\theta,\varepsilon\sin\theta \right)
\end{equation}
Each point in the network can thus be expressed as:
\begin{equation}
    \left( x_{i},y_{i} \right) = \alpha_i\mathbf{u}_{\mathbf{1}} + \beta_i\mathbf{u}_{\mathbf{2}}
\end{equation}
where \(\alpha_i,\beta_i \in \mathbb{Z}\). 
The Euclidean distance between two distinct \(( x_{i},y_{i})\) and \(( x_{j},y_{j})\) , with \( i \neq j\) , is given by:

\begin{equation}
    \eta = \sqrt{\left( x_{i} - x_{j} \right)^{2} + \left( y_{i} - y_{j} \right)^{2}}
\end{equation}

\noindent The vector between these points is a linear combination of the basis
vectors:
\begin{equation}
    \left( x_{i} - x_{j},y_{i} - y_{j} \right) = m_{1}\mathbf{u}_{\mathbf{1}} + m_{2}\mathbf{u}_{\mathbf{2}}
\end{equation}

\noindent where \(m_{1},m_{2} \in \mathbb{Z} / \{0\} \).
Substituting the basis vectors, the distance becomes:

\begin{equation}
    \eta\, = \, \varepsilon\,\sqrt{\left( m_{1}\, + \, m_{2}\,\cos\theta \right)^{2}\, + \,\left( m_{2}\,\sin\theta \right)^{2}}
\end{equation}

\noindent To minimize this distance the smallest possible values of \(m_{1}\) and
\(m_{2}\) are \(\pm 1\) , corresponding to the nearest neighbors.

Case 1: \(m_{1} = \pm 1,m_{2} = 0\):
\begin{equation}
    \eta = \varepsilon\sqrt{(1 + 0)^{2} + 0^{2}} = \varepsilon
\end{equation}

Case 2: \(m_{1} = 0,m_{2} = \pm 1\):
\begin{equation}
    \eta = \varepsilon\sqrt{\left( 0 + \cos\theta \right)^{2} + \sin^2\theta}
\end{equation}
Using the trigonometric identity
\(\cos^{2}(\theta) + \sin^{2}(\theta) = 1\), this results \(\varepsilon\).

Case 3: \(m_{1} = \pm 1,m_{2} = \pm 1\):
\begin{align}
     &\varepsilon\,\sqrt{m_{1}^{2}\, + 2m_{1}m_{2}\,\cos\theta + m_{2}^{2}\cos^{2}\theta\, + \, m_{2}^{2}\,\sin^{2}\theta} = \\&\varepsilon\,\sqrt{m_{1}^{2}\, + 2m_{1}\, m_{2}\,\cos\theta + m_{2}^{2}}
\end{align}
Since \(m_{1} = \pm 1,m_{2} = \pm 1\), then we have:

\begin{equation}
    \eta = \varepsilon\,\sqrt{2\, \pm \, 2\cos\theta}
\end{equation}
The minimum distance in this case is
\(\eta = \varepsilon\,\sqrt{2\, - 2\cos\theta}\), and since \(\cos\left( \frac{\pi}{3} \right) = \frac{1}{2}\) , \(\eta\) equals to \(\varepsilon\). 

\end{proof}

\begin{proposition}
 \label{pr_minNumber}
    The minimum number of nodes is generated to explore a given space when \(\theta\  = \frac{\pi}{3}\).
\end{proposition}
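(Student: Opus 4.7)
The plan is to argue via a node-density-versus-coverage trade-off: first expressing the number of waypoints per unit area as a function of $\theta$, then showing that the hexagonal angle minimizes that density subject to the coverage requirement implicit in ``exploring a given space.'' I would begin by recalling from the proof of Proposition~\ref{pr_distabce} that the generated waypoints form a two-dimensional Bravais lattice $\Lambda(\theta)$ with basis vectors $\mathbf{u}_1=(\varepsilon,0)$ and $\mathbf{u}_2=(\varepsilon\cos\theta,\varepsilon\sin\theta)$; its fundamental parallelogram has area $A(\theta)=\varepsilon^{2}\sin\theta$, so the number of waypoints needed to tile a region of area $S$ is $N(\theta)=S/A(\theta)$.

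Next I would impose the safety constraint inherited from Proposition~\ref{pr_distabce} that the minimum pairwise distance is at least $\varepsilon$. From the Case~3 computation in that proof, this forces $|\cos\theta|\leq 1/2$, i.e.\ $\theta\in[\pi/3,2\pi/3]$. The raw minimizer of $N(\theta)$ in this interval would be $\theta=\pi/2$, so a second geometric ingredient is required: to genuinely explore the space, every point of the region must lie within distance $\varepsilon$ of some waypoint, which amounts to a bound on the circumradius of the Voronoi cell of $\Lambda(\theta)$. Computing this circumradius as a function of $\theta$ under the fixed safety spacing, I would show it is smallest exactly at $\theta=\pi/3$, where the Voronoi cell is a regular hexagon of circumradius $\varepsilon/\sqrt{3}$, while for $\theta\in(\pi/3,2\pi/3)$ the cell elongates and its circumradius strictly exceeds $\varepsilon/\sqrt{3}$.

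Combining the two, I would then argue that under both constraints any $\theta\neq\pi/3$ either enlarges the Voronoi cell beyond the admissible coverage radius --- forcing additional interpolated waypoints to restore coverage and thereby strictly increasing $N$ --- or requires rescaling the basis length $\varepsilon$ upward (so that the tighter square-lattice coverage radius $\varepsilon/\sqrt{2}$ matches $\varepsilon/\sqrt{3}$), which again raises the density per fixed coverage quality. The unique $\theta$ saturating both the safety and the coverage bounds simultaneously is $\theta=\pi/3$, at which six nearest neighbors sit at exactly distance $\varepsilon$ and the Voronoi hexagon attains the minimal circumradius consistent with the safety spacing.

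The main obstacle will be the coverage step: the raw density $1/(\varepsilon^{2}\sin\theta)$ on its own points to $\theta=\pi/2$, so the argument hinges on making the coverage requirement precise and showing rigorously that degradation of the Voronoi-cell circumradius as $\theta$ moves away from $\pi/3$ forces extra nodes elsewhere. This ultimately reduces to the classical planar lattice-covering inequality of Fejes T\'oth, which asserts that the hexagonal lattice attains the optimal covering density in $\mathbb{R}^{2}$; I would either cite this result or re-derive it within the restricted one-parameter family $\{\Lambda(\theta):\theta\in[\pi/3,2\pi/3]\}$ to complete the proof.
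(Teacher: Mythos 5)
Your route is genuinely different from the paper's. The paper argues per expansion step: at $\theta=\pi/3$ each node spawns $2\pi/\theta=6$ neighbours, any smaller $\theta$ spawns more, and for larger $\theta$ the polygon of neighbours has area proportional to $(2\pi/\theta)\,\varepsilon^{2}\sin\theta$, which decreases for $\theta>\pi/3$, so more ``convex hulls'' are needed to cover the region. You instead work with the global lattice $\Lambda(\theta)$, its point density $1/(\varepsilon^{2}\sin\theta)$, and the covering radius of its Voronoi cell, reducing the claim to the classical lattice-covering optimality of the hexagonal lattice (Kershner/Fejes T\'oth). Your quantitative claims check out: the constraint from Case~3 of Proposition~\ref{pr_distabce} does give $\theta\in[\pi/3,2\pi/3]$, and the covering radius of $\Lambda(\theta)$ at unit spacing is $\varepsilon/\bigl(2\cos(\theta/2)\bigr)$ on $[\pi/3,\pi/2]$, which equals $\varepsilon/\sqrt{3}$ exactly at $\theta=\pi/3$ and strictly exceeds it on the open interval. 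Your framing is arguably more principled than the paper's, which never formalises what ``coverage'' means and ignores node sharing between adjacent expansions; it also surfaces a subtlety the paper misses entirely, namely that density alone favours $\theta=\pi/2$. (One small caveat: $\theta=2\pi/3$ generates the \emph{same} point set as $\theta=\pi/3$, so ``unique'' should be read up to the $\theta\mapsto\pi-\theta$ symmetry.)

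However, the proposal as written does not close the argument at precisely the step you flag as the obstacle. If the exploration requirement is only ``every point lies within distance $\varepsilon$ of some waypoint'' with $\varepsilon$ fixed, then $\theta=\pi/2$ satisfies it (covering radius $\varepsilon/\sqrt{2}<\varepsilon$) while using strictly \emph{fewer} nodes per unit area than $\theta=\pi/3$, which would contradict the proposition. To make the optimisation come out at $\pi/3$ you must commit to the other normalisation: fix the required covering radius $\rho$, let the spacing float to the largest value achieving it, and then compare densities --- hexagonal needs spacing $\rho\sqrt{3}$ and density $2/(3\sqrt{3}\rho^{2})\approx 0.385/\rho^{2}$ versus the square lattice's $1/(2\rho^{2})$. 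That computation, restricted to your one-parameter family, is elementary and would complete the proof without invoking the full Fejes T\'oth theorem; but until you fix which quantity is held constant, the density bound and the coverage bound pull in opposite directions and the conclusion does not follow. The paper's own proof suffers from an analogous vagueness, so this is less a defect of your strategy than the one step that must be made precise before either argument is rigorous.
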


\begin{proof}
    We examine two possible cases for different values of \(\theta\).
    
    Case 1: \(\theta <\) \(\frac{\pi}{3}\):
    
    At each node expansion step, when \(\theta\  = \frac{\pi}{3}\), six
    nodes are generated in the graph to cover the adjacent space. Any value
    of \(\theta\) less than \(\frac{\pi}{3}\) results in the generation of
    more nodes.
    
    Case 2: \(\theta >\) \(\frac{\pi}{3}\):
    
    The efficiency of the convex hull for a set of discrete points is
    evaluated based on the density of coverage. Each convex hull is bounded
    by a hexagon formed by connecting neighboring points, and the area of
    the hexagonal unit cell is calculated as:
    \begin{align}
        A_\text{hex} &= \frac{2\pi}{\theta} \parallel \mathbf{u}_{\mathbf{1}} \parallel   \parallel \mathbf{u}_{\mathbf{2}} \parallel  \sin\theta \\ &= \varepsilon^{2} \sin\left( \frac{2\pi}{3} \right)6 
        = \varepsilon^{2}3\sqrt{3}
    \end{align}  
    For any value of \(\theta\) greater than \(\frac{\pi}{3}\), the ratio
    \(\frac{2\pi}{\theta}\) decreases faster than \(\sin\theta\)
    increases, resulting in reduced area coverage. This implies that either
    more convex hulls are required to cover the given space, or the covered
    area is proportional to the hexagonal convex hull. This demonstrates
    that when \(\theta = \frac{\pi}{3}\), the minimum number of nodes is
    generated to effectively explore the given space.  
\end{proof} 

Fig.~\ref{node}
illustrates that smaller angles result in overlapping nodes and
increased redundancy due to overly dense node generation, leading to
inefficient use of computational resources. Conversely, larger angles
reduce redundancy but sacrifice spatial coverage, creating gaps in the
network that can compromise the algorithm's efficiency.

\begin{figure}[ht]
\centerline{\includegraphics[width=\columnwidth]{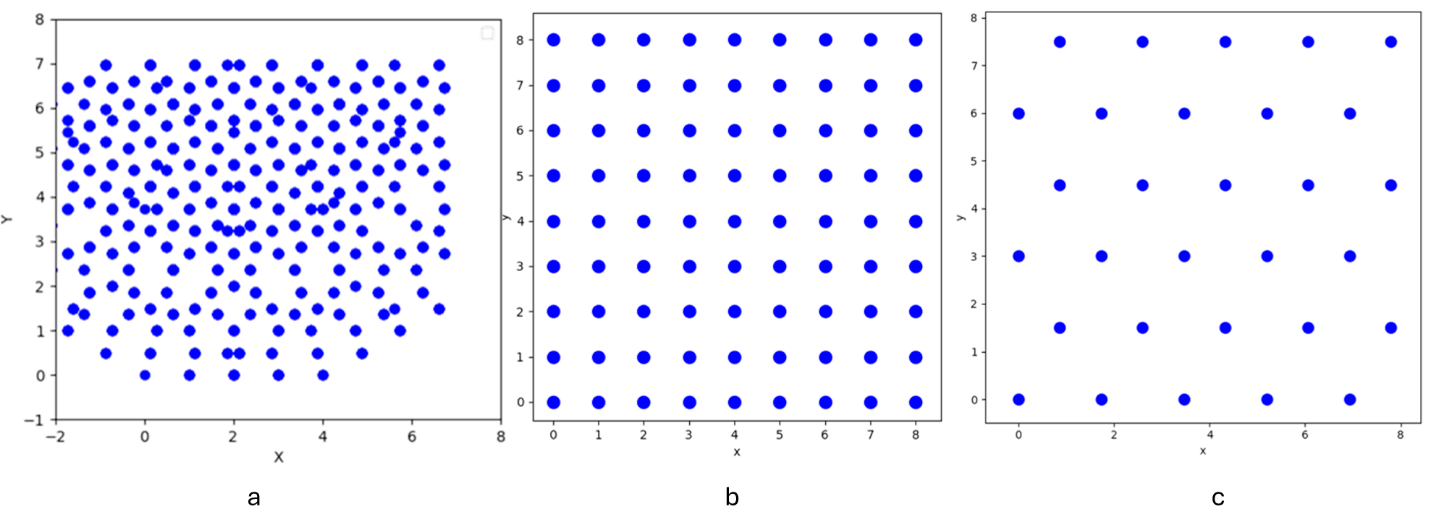}}
\caption{\textbf{Node distributions for networks generated in the
$x$-$y$ plane with different angles \(\theta\)}. Panel (a) shows
the distribution for \(\theta = \frac{\pi}{6},\) resulting in
overlapping nodes and increased redundancy. Panel (b) represents
\(\theta = \frac{\pi}{2}\) needing more nodes to cover the same area.
Panel (c) illustrates the optimal distribution at
\(\theta = \frac{\pi}{3}\), where the minimum number of nodes is
generated, maintaining a uniform distance between distinct points.}
\label{node}
\end{figure}

\begin{proposition}[Termination of V*]
\label{pr_terminate}

The V* algorithm with stationary waiting actions terminates within finite time provided that:
\begin{itemize}
    \item The space-time-velocity lattice is bounded by maximum planning horizon $T_{\max} < \infty$
    \item Spatial and velocity discretizations are finite
    \item The heuristic is admissible and consistent
    \item Edge costs are bounded: $c(n,n') = \tau \geq \epsilon > 0$ for all transitions
\end{itemize}
\end{proposition}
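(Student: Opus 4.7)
The plan is to reduce termination to a finite-expansion argument: show the reachable state space is finite, invoke consistency to bound each node's expansions, and conclude that the main loop of Algorithm~\ref{alg:vstar} can execute only finitely many iterations.

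First, I would bound the cardinality of the reachable lattice. A state has the form $(x,y,\theta,v,t)$, and each coordinate lives in a finite set under the stated hypotheses: the hexagonal spatial lattice of spacing $m$ restricted to the ball of radius $v_{\max}T_{\max}$ around $n_0$ contains $O((v_{\max}T_{\max}/m)^2)$ points; the heading set produced by the construction in Propositions~\ref{pr_distabce}--\ref{pr_minNumber} is finite; velocity discretisation is finite by assumption; and because every edge advances the time coordinate by at least $\epsilon$, the time axis has at most $\lceil T_{\max}/\epsilon\rceil$ admissible values. Multiplying these gives a finite bound $|V|<\infty$ on the set of lattice states ever available for expansion.

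Next, I would apply the standard A* termination argument. Proposition~\ref{pr_optimal} established that $\widehat{h}$ is admissible and consistent, so $f$-values along any expanded path are monotonically non-decreasing and, whenever a node is popped from the open list, its $g$-value is already optimal; hence the branch in lines~24--28 of Algorithm~\ref{alg:vstar} is never triggered after a node has been selected once, and each state is removed from the open list at most once. Combined with the first step, the total number of expansions is at most $|V|$. Since \textbf{generate\_neighbours} scans only $O(I_{\max}^2)$ lattice offsets per call and all priority-queue operations are polylogarithmic in $|V|$, the overall running time is finite. If the goal-acceptance test on line~10 is never satisfied, the open list eventually empties and the \textbf{while} loop exits.

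The main obstacle I anticipate is ruling out the pathological behaviour introduced by stationary waiting actions, which could \emph{a priori} let the planner append arbitrarily long sequences of zero-velocity self-transitions and revisit the same spatial configuration indefinitely. The argument must show that time acts as a strictly monotone, bounded resource: each waiting edge still satisfies $c(n,n')=\tau\geq\epsilon$, so it advances $t$ by at least $\epsilon$ and produces a lattice state with a strictly larger time coordinate, distinct from its predecessor. Once $t$ would exceed $T_{\max}$ the successor is pruned as infeasible, so at most $\lceil T_{\max}/\epsilon\rceil$ waits can occur along any trajectory. This observation, inserted into the finiteness count above, closes the loop and delivers the desired finite-time termination guarantee.
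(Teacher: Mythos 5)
Your proof is correct and takes essentially the same route as the paper's: both reduce termination to finiteness of the space--time--velocity lattice, use the strictly advancing and bounded time coordinate to cap the number of distinct stationary waiting states, and rely on the closed set (together with consistency) to guarantee each state is expanded at most once. The paper additionally bounds the expanded set by the nodes with $f(n)\le g^*$, but that is an efficiency refinement your direct counting argument does not need, and your explicit handling of the no-solution case (open list empties) is if anything slightly cleaner.
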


\begin{proof}
We establish termination by proving the search space is finite and the algorithm cannot loop indefinitely, accounting for both motion and stationary states.

\textbf{Finite State Space:} The total number of possible states in the V* lattice is bounded by $\S = |X| \times |Y| \times |\Theta| \times |V| \times |T| < \infty$ where each dimension is finite by assumption. Specifically, spatial positions are bounded by $|X| \times |Y| = O((\frac{R}{\varepsilon})^2)$ where $R$ is the maximum reachable distance, headings by $|\Theta| = \frac{2\pi}{ \Delta \theta}$, velocities by $|V| = \frac{v_{\max}}{\Delta v} + 1$, and time steps by $|T| = \frac{T_{\max}}{\tau}$. Crucially, even when V* generates stationary states for waiting behaviors, each combination $(x, y, \theta, v, t)$ represents a unique state in the space-time lattice. Multiple stationary nodes at the same spatial location with different arrival times $(x, y, \theta, 0, t_1), (x, y, \theta, 0, t_2), \ldots$ are distinct states, and their total number remains bounded by the finite time horizon.

\textbf{Bounded Stationary Actions:} For any spatial location, the algorithm can generate at most $\frac{T_{\max}}{\tau}$ distinct waiting states, corresponding to arrival times $t, t+\tau, t+2\tau, \ldots, T_{\max}$. Since the total number of spatial locations is finite and the time horizon is bounded, the number of possible stationary states is bounded by $|X| \times |Y| \times |\Theta| \times \frac{T_{\max}}{\tau}$. The algorithm cannot wait indefinitely at any location because each waiting action incurs cost $\tau$, making excessive waiting suboptimal compared to motion toward the goal.

\textbf{Monotonic Cost Property:} Let $g^*$ be the cost of an optimal path to the goal, which is finite by the reachability assumption. Since the heuristic is admissible, we have $f(n_0) = \widehat{h}(n_0) \leq g^*$. For any node $n$ with $f(n) > g^*$, including both motion and stationary nodes, if $n$ were expanded, any path through $n$ would have cost at least $f(n) > g^*$, making it suboptimal. Since V* maintains the property of expanding nodes in order of increasing $f$-value, it will find a goal node with cost $g^*$ before expanding any node with $f(n) > g^*$.

\textbf{Bounded Expansion:} The number of nodes with $f(n) \leq g^*$ is finite because time is bounded by $t \leq T_{\max}$, spatial reach is bounded by maximum reachable distance from the start, and all state dimensions are discrete and finite. This bound applies to both motion states and stationary waiting states, as waiting beyond the optimal solution cost becomes dominated by nodes with better $f$-values.

\textbf{Goal Convergence:} When V* expands a node within the termination threshold $\varepsilon$ of the spatial goal location, it terminates successfully. Since the minimum distance between distinct lattice nodes is bounded by the discretization resolution, the algorithm will eventually reach the goal region through either motion or optimally-timed waiting behaviors.

\textbf{No Infinite Cycles:} V* maintains a closed set of expanded nodes and never re-expands them. Combined with the finite state space that includes all possible stationary states, this prevents infinite loops. The algorithm cannot cycle between waiting states at the same location because each waiting state has a distinct time component, and nodes are only expanded once.

Therefore, V* must terminate by either finding a goal node through successful termination, or exhausting all nodes with $f(n) \leq g^*$ without finding a goal, indicating no feasible solution exists within the bounded lattice. Both cases result in finite termination regardless of whether the optimal solution involves motion, waiting, or a combination of both behaviors.
\end{proof}

\subsection*{Kinematic Constraints}
The V* algorithm generates discrete waypoints that represent vehicle
states in space and time. However, when these points deviate in
direction, the resulting path implicitly assumes that the vehicle can
execute abrupt turns between successive waypoints. In practice, such
instantaneous changes in orientation are not physically feasible, as
vehicles follow smooth, curved trajectories.

To ensure realistic motion representation, V* uses a simplified version
of the Ackermann steering system \cite{simionescu2002}. This system governs the steering
geometry of vehicles, ensuring that the wheels trace concentric arcs
during a turn and avoid lateral slipping. For modeling purposes, it is
commonly reduced to the bicycle model \cite{mistri2019}, in which a single
front and rear wheel represent the front and rear axles of the vehicle,
respectively. The front wheel is used to steer the vehicle, and its
motion determines the turning behavior. The state of the vehicle in the
bicycle model is defined by its position (\(x,y\)), heading angle
\(\theta\), rear wheel speed \(v\), and steering angle \(\delta\). The
equations of motion for the bicycle model are:

\begin{align}
    \dot{x} &= v\cos\theta \\
    \dot{y} &= v\sin\theta \\
    \label{thetadot}
    \dot{\theta} &= \frac{v}{L}\tan\delta \\
    \dot{v}  &=  a
\end{align}
where \(L\) is the distance between the front and rear axles, \(a\) is the scalar acceleration  of the rear wheel. Fig.~\ref{bicyclemodel} shows a schematic representation of the bicycle model used in this paper.

\begin{figure}[htbp]
\centerline{\includegraphics[width=.5\columnwidth]{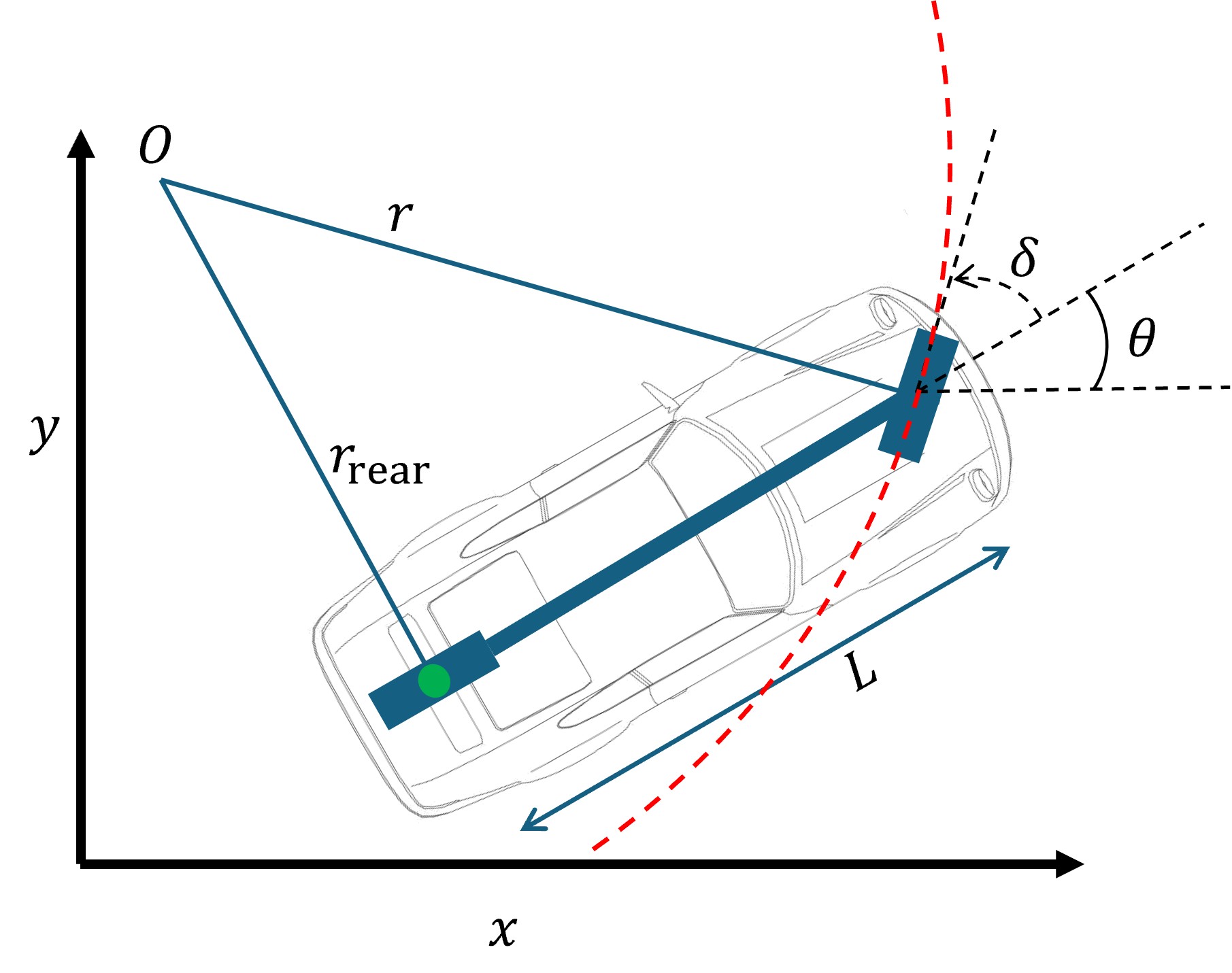}}
\caption{\textbf{Schematic representation of the Ackermann steering mechanism.}
The front wheel is following a curved trajectory (dashed red line), with
a steering angle \(\delta\). The radius of the circular path traced by the
front is denoted by \(r\), and \(r_{\text{rear}}\) shows the rear
axle turning radius.}
\label{bicyclemodel}
\end{figure}

When a vehicle initiates a turn from a straight-line trajectory, the
velocity of the rear wheels is no longer the same as front wheel.
Initially, the rear wheel begins to lag, and its speed slows down over
time until it eventually converges to a fixed value. As the turn develops, the rear wheel's speed decreases gradually and
stabilizes at a steady-state. Fig.~\ref{steadystate} shows the trajectory of a front
and rear wheel of a turning vehicle, since the beginning of the turning
until the reaching to the steady state.

\begin{figure}[htbp]
\centerline{\includegraphics{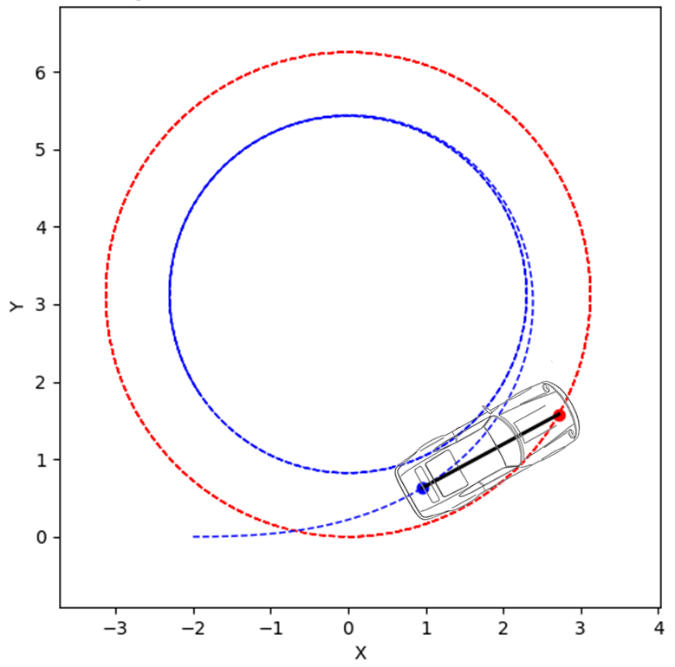}}
\caption{\textbf{The front wheel and rear wheel trajectory from the initial
state until the steady-state.}}
\label{steadystate}
\end{figure}

When turning, the front wheel follows a circular path with radius \(r\)
at an angular speed \(\omega\). At the same time, the vehicle's geometry
fixes the distance between the front and rear wheels (the wheelbase) to
be \(L\). In the steady-state, the rear wheel travels along a circle
whose center is the same as the turning center of the front wheel, but with a different radius, \(r_{\text{rear}}\).  Therefore, by the Pythagorean theorem, we have:

\begin{equation}
    r^2 = L^2 + r_{\text{rear}}^2
\end{equation}

Since the AV rotates as a rigid body with the same angular speed \(\omega\),
the rear wheel's linear speed is simply the product of the angular speed and the radius of its path. Therefore:

\begin{equation}
\label{eq:vfinal}
    v_{\infty}  = \omega r_{\text{rear}}\  = \omega\sqrt{r^{2}-\ L^{2}}
\end{equation}

We model the rear wheel speed, \(v(t)\), as being determined by the
cosine of the instantaneous steering angle \(\delta(t)\),

\begin{equation}
    v(t) = r\omega\cos\left\lbrack \delta(t) \right\rbrack
\end{equation}
where \(r\) is the front wheel's circular path radius. At \(t\  = \ 0\),
when the vehicle travels in a straight line (\(\delta(0) = 0\)), we
have \(v_{r}(0)= r\omega\). In steady state, the rear wheel speed
becomes

\begin{equation}
    v_{\infty}  =  r\omega\cos\delta_{\infty}.
\end{equation}

\noindent According to Eq.~\ref{eq:vfinal}, we have \(\cos\delta_{\infty} = \frac{\ \sqrt{r^{2}-\ L^{2}}}{r}\). 

\noindent To capture the transition between these two limits, we assume that the
steering angle evolves exponentially toward its final value. That is, we
set:

\begin{equation}
\label{eq:delta}
    \delta(t)  = \delta_{\infty}\ \lbrack 1 -\exp{(-\beta t)}\rbrack
\end{equation}
where \(\beta\  > \ 0\) is a decay constant governing the rate at which
the steering angle approaches \(\delta_{\infty}\). Substituting this into
the expression for \(v(t)\) yields

\begin{equation}
    v(t) = \ r\omega\cos\left( \delta_{\infty}\left\lbrack 1\ -\exp(-\beta t) \right\rbrack \right).
\end{equation}

\noindent  This formulation satisfies the correct boundary conditions: at
\(t  =  0\), \(\exp{(-\beta \cdot 0)}  =  1\), so
\(\delta(0)  =  0\) and \(v(0)  =  r\omega\); as
\(t \rightarrow \infty,\ \exp{(-\beta t)}\  \rightarrow \ 0\),
so \(\delta(\infty)  = \delta_{\infty}\) and
\(v(\infty)\  = r\ \omega\cos{(\delta_{\infty})}\  = \ \omega\sqrt{r^{2}-\ L^{2}}\).

\noindent In the kinematic bicycle model, the rate of change of the body
orientation is related to the vehicle's speed and the steering
angle. Specifically, one has

\begin{equation}
    \frac{d\theta}{dt} = \left( \frac{v(t)}{L} \right)\tan\delta(t)
\end{equation}

\noindent To model the transition in steering, we assume that the steering angle increases
exponentially from zero to a final value \(\delta_{\infty}\) as represented in Eq.~\ref{eq:delta}.
\noindent  Thus, the orientation equation becomes
\begin{align}
     v(t) &= \ r\omega\cos\left( \delta_{\infty} \ \left\lbrack 1\ -\exp(-\beta t) \right\rbrack \right) \\
     \frac{d\theta}{dt} &= \left( \frac{v(t)}{L} \right) \tan\left( \delta_{\infty}\left\lbrack 1\ -\exp(-\beta t) \right\rbrack \right)
\end{align}

\noindent  Therefore,
\begin{equation}
    \theta(t) = \ \left( \frac{r\omega}{L} \right)\int_{s = 0}^{t}{\sin\left( \delta_{\infty} \left\lbrack 1-\exp(-\beta s) \right\rbrack \right)ds}
\end{equation}
The transient behavior is captured by the exponential term
\(\exp( -\beta t)\), where \(\beta\) functions as a time constant that
governs the rate of convergence toward the steady-state condition and
has units of 1/s. The value of \(\beta\) is
determined by the interaction of several geometric and kinematic
parameters, which influence how rapidly the vehicle adjusts during
turning maneuvers.

When a vehicle initiates a turn, it does so at an angular speed
\(\omega\), and this parameter has a direct bearing on the rate of
transition. A higher \(\omega\) implies a sharper turn and a greater
demand on the system to reorient quickly. As such, \(\beta\) must scale
accordingly to reflect the faster dynamics. The radius \(r\) of the
front wheel's path, indicative of curvature, also plays a role. A larger
\(r\) corresponds to a gentler curve, leading to a smaller discrepancy
in the alignment between the front and rear wheels, and thereby
permitting a more rapid transition. Hence, \(\beta\) is expected to
increase with \(r\) as well.

The wheelbase \(L\), defined as the distance between the front and rear
axles, introduces a spatial lag in the motion of the rear wheel. A
larger \(L\) causes the vehicle to turn more gradually because the rate
of change in heading angle decreases as \(L\) increases. This is evident
from the kinematic Eq.~\ref{thetadot}. As \(L\) increases, the angular
velocity becomes smaller, meaning the vehicle's orientation changes more
slowly. Consequently, the rear wheel takes longer to align with the new
direction, resulting in a slower convergence of its speed. This
indicates that \(\beta\) is inversely proportional to \(L\). Finally, we
can express these relationships as:

\begin{equation}
    \beta\  \propto \frac{\omega r}{L} \rightarrow \beta = k\frac{\omega r}{L}
\end{equation}

\noindent where \(k\) is a dimensionless parameter calibrated through empirical
observation. Numerical simulations suggest that for a wheelbase of \qty[mode=text]{2}{\meter},
\(k\ \)consistently equals 0.4 regardless of the values of \(\omega\)
and \(r\).

Once the body orientation \(\theta(t)\) is known, the rear wheel's
position can be obtained using simple geometry. If the front wheel's
coordinates are given by \(F_{x}(t)\) and \(F_{y}(t)\), then the rear
wheel position is computed as:
\begin{align}
    R_{x}(t) &= F_{x}(t) - L\cos\theta(t) \\
    R_{y}(t) &= F_{y}(t) - L\sin\theta(t)
\end{align}

\section*{Discussion}

This study introduces V*, a velocity-aware motion planning algorithm that explicitly incorporates velocity and time as structural parameters within the search graph. While building upon established kinodynamic planning principles, V* offers a distinct approach by encoding velocity as node state variables rather than embedding velocity information within motion primitives or handling it through post-processing steps.

The core innovation lies in V*'s dynamic graph generation approach, where feasible transitions are verified on-demand through geometric pruning and real-time kinematic bicycle model checking. This enables discovery of motion combinations not explicitly precomputed, providing greater adaptability compared to Hybrid A* and similar primitive-based approaches that rely on offline boundary value problem solvers. Our theoretical contributions establish that the hexagonal discretisation strategy achieves minimal node redundancy while maintaining uniform waypoint spacing, and we provide formal termination guarantees even when incorporating stationary waiting behaviors.

The mathematical formulation for transient steering dynamics captures realistic vehicle maneuvering behavior, ensuring that discrete waypoint sequences translate to physically feasible continuous trajectories without requiring post-hoc trajectory refinement. Experimental evaluation demonstrates substantial performance improvements through the waterflow heuristic, which reduced node expansions by up to 98\% while maintaining optimal solution quality by incorporating environmental obstacles into cost-to-go estimation.

V* provides fundamental advantages in temporal reasoning through its native time-velocity representation. The explicit incorporation of zero-velocity states enables direct representation of waiting behaviors, allowing proactive coordination with moving obstacles. The dynamic obstacle simulations illustrate this capability, where the vehicle demonstrates anticipatory waiting to yield to crossing obstacles. The dimensional expansion to five-dimensional search space represents a trade-off between expressiveness and computational complexity, but geometric pruning strategies effectively manage computational overhead by eliminating infeasible transitions during node expansion.

The algorithm enables fundamentally new planning capabilities that emerge naturally from its representation: temporal coordination with moving obstacles, intentional waiting behaviors, and direct velocity-time trade-off optimization. Unlike primitive-based approaches that require specialized design, V*'s native space-time-velocity representation makes these capabilities architecturally inherent, positioning it as a foundational approach for autonomous vehicle motion planning with practical feasibility for real-world deployment.

Although the current formulation addresses only deterministic single-agent scenarios, the structure of V* provides
a solid foundation for broader applications. Its velocity-aware representation naturally extends
to multi-agent settings, where coordination depends on both spatial positioning and temporal consistency.
Additionally, the heuristic function could be adapted to incorporate probabilistic models or
data-driven priors, enabling the algorithm to handle uncertainty and dynamic environments. These
characteristics position V* as a flexible framework capable of supporting more complex planning
tasks beyond its current scope.

\clearpage 

%
\bibliography{science_template} 
\bibliographystyle{sciencemag}

%
%
%
%
%
%


\section*{Acknowledgments}
The authors gratefully acknowledge the University of Sydney for providing research facilities and support throughout this project. Special thanks are extended to Bita Ghamooshi Ramandi and Ali Afrasiabi for their insightful feedback and valuable assistance to this work.

\paragraph*{Funding}
This work was supported by the Australian Research Council Discovery Project~DP220100882.

\paragraph*{Author contributions}
\textbf{A.Z.A.}: Conceptualization; Methodology; Software; Writing – original draft.  
\textbf{M.G.H.B.}: Supervision; Methodology; Writing – review \& editing.  
\textbf{M.R.}: Supervision; Methodology; Writing – review \& editing.  
\textbf{G.G.}: Advising; Writing – review \& editing.

\end{document}